\DeclareMathOperator*{\argmax}{arg\,max}
\theoremstyle{definition}
\newtheorem{definition}{Definition}[section]  
\theoremstyle{remark}
\newtheorem*{remark}{Remark}
\theoremstyle{plain}
\newtheorem*{theorem*}{Theorem}
\newtheorem{theorem}{Theorem}
\definecolor{DarkGreen}{rgb}{0.1,0.5,0.1}
\definecolor{DarkRed}{rgb}{0.5,0.1,0.1}
\definecolor{DarkBlue}{rgb}{0.1,0.1,0.5}
\title{The Limitations of Limited Context for Constituency Parsing} 
\author{Yuchen Li \\
  Carnegie Mellon University \\
  \texttt{yuchenl4@andrew.cmu.edu} \\
  \and
  Andrej Risteski \\
  Carnegie Mellon University \\
  \texttt{aristesk@andrew.cmu.edu} \\}
\date{}
\begin{document}

\maketitle

\begin{abstract}

Incorporating syntax into neural approaches in NLP has a multitude of practical and scientific benefits. For instance, a language model that is syntax-aware is likely to be able to produce better samples; even a discriminative model like BERT with a syntax module could be used for core NLP tasks like unsupervised syntactic parsing.  Rapid progress in recent years was arguably spurred on by the empirical success of the Parsing-Reading-Predict architecture of \citep{shen2018neural}, later simplified by the Order Neuron LSTM of \citep{shen2018ordered}. Most notably, this is the first time neural approaches were able to successfully perform unsupervised syntactic parsing (evaluated by various metrics like F-1 score). 

However, even heuristic (much less fully mathematical) understanding  of why and when these architectures work is lagging severely behind. In this work, we answer representational questions raised by the architectures in \citep{shen2018neural, shen2018ordered}, as well as some transition-based syntax-aware language models \citep{dyer2016recurrent}: \emph{what kind of syntactic structure can current neural approaches to syntax represent}? Concretely, we ground this question in the sandbox of probabilistic context-free-grammars (PCFGs), and identify a key aspect of the representational power of these approaches: the \emph{amount} and \emph{directionality} of context that the predictor has access to when forced to make parsing decision. We show that with \emph{limited context} (either bounded, or unidirectional), there are PCFGs, for which these approaches cannot represent the max-likelihood parse; conversely, if the context is \emph{unlimited}, they can represent the max-likelihood parse of any PCFG.

\end{abstract}

\section{Introduction} \label{sec:intro}

Neural approaches have been steadily making their way to NLP in recent years. By and large however, the neural techniques that have been scaled-up the most and receive widespread usage do not explicitly try to encode discrete structure that is natural to language, e.g. syntax. The reason for this is perhaps not surprising: neural models have largely achieved substantial improvements in \emph{unsupervised} settings, BERT \citep{devlin-etal-2019-bert} being the de-facto method for unsupervised pre-training in most NLP settings. On the other hand unsupervised \emph{syntactic} tasks, e.g. unsupervised syntactic parsing, have long been known to be very difficult tasks \citep{htut2018grammar}.
 However, since incorporating  syntax has been shown to improve language modeling \citep{kim2019unsupervised} as well as natural language inference \citep{chen-etal-2017-enhanced, pang2019improving, he-etal-2020-enhancing}, syntactic parsing remains important even in the current era when large pre-trained models, like BERT \citep{devlin-etal-2019-bert}, are available. 

Arguably, the breakthrough works in unsupervised constituency parsing in a neural manner were \citep{shen2018neural, shen2018ordered}, achieving F1 scores 42.8 and 49.4 on the WSJ Penn Treebank dataset \citep{htut2018grammar, shen2018ordered}. Both of these architectures, however (especially \citealp{shen2018neural}) are quite intricate, and it's difficult to evaluate what their representational power is (i.e. what \emph{kinds} of structure can they recover). 
Moreover, as subsequent more thorough evaluations show \citep{kim2019unsupervised, kim2019compound}, these methods still have a rather large performance gap with the oracle binary tree (which is the best binary parse tree according to F1-score) --- raising the question of \emph{what is missing} in these methods. 

We theoretically answer both questions raised in the prior paragraph. 
We quantify the representational power of two major frameworks in neural approaches to syntax: \emph{learning a syntactic distance} \citep{shen2018neural, shen2018straight, shen2018ordered} and \emph{learning to parse through sequential transitions} \citep{dyer2016recurrent, chelba1997structured}. To formalize our results, we consider the well-established sandbox of \emph{probabilistic context-free grammars} (PCFGs). 
Namely, we ask:

\emph{When is a neural model based on a syntactic distance or transitions able to represent the max-likelihood parse of a sentence generated from a PCFG?} 

We focus on a crucial ``hyperparameter'' common to practical implementations of both families of methods that turns out to govern the representational power: the amount and type of context the model is allowed to use when making its predictions. 
Briefly, for every position $t$ in the sentence, syntactic distance models learn a distance $d_t$ to the previous token --- the tree is then inferred from this distance;  
transition-based models iteratively construct the parse tree by deciding, at each position $t$, what operations to perform on a partial parse up to token $t$. A salient feature of both is the \emph{context}, that is, \emph{which tokens is }$d_t$ \emph{a function of}  (correspondingly, which tokens can the choice of operations at token $t$ depend on)? 

We show that when the context is either \emph{bounded} (that is, $d_t$ only depends on a bounded window around the $t$-th token) or \emph{unidirectional} (that is, $d_t$ only considers the tokens to the left of the $t$-th token), there are PCFGs for which no distance metric (correspondingly, no algorithm to choose the sequence of transitions) works. On the other hand, if the context is \emph{unbounded in both directions} then both methods work: that is, for any parse, we can design a distance metric (correspondingly, a sequence of transitions) 
that recovers it.

This is of considerable importance: in practical implementations the context is either bounded (e.g. in \citealp{shen2018neural}, the distance metric is parametrized by a convolutional kernel with a constant width) or unidirectional (e.g. in \citealp{shen2018ordered}, the distance metric is computed by a LSTM, which performs a left-to-right computation). 

This formally confirms a conjecture of \citet{htut2018grammar}, who suggested that because these models commit to parsing decision in a left-to-right fashion and are trained as a part of a language model, it may be difficult for them to capture sufficiently complex syntactic dependencies. Our techniques are fairly generic and seem amenable to analyzing other approaches to syntax. 
Finally, while the existence of \emph{a particular} PCFG that is problematic for these methods doesn't necessarily imply that the difficulties will carry over to real-life data, the PCFGs that are used in our proofs closely track linguistic intuitions about difficult syntactic structures to infer: the parse depends on words that come much later in the sentence.

\section{Overview of Results} \label{sec:overview}

We consider several neural architectures that have shown success in various syntactic tasks, most notably unsupervised constituency parsing and syntax-aware language modeling. 
The general framework these architectures fall under is as follows: to parse a sentence $W = w_1 w_2 ... w_n$ with a trained neural model, the sentence $W$ is input into the model, which outputs $o_t$ at each step $t$, and finally all the outputs $\{o_t\}_{t=1}^n$ are utilized to produce the parse. 

Given unbounded time and space resources, by a seminal result of \citet{siegelmann1992rnn}, an RNN implementation of this framework is Turing complete. 
In practice it is common to restrict the form of the output $o_t$ in some way. In this paper, we consider the two most common approaches, in which $o_t$ is 
a \emph{real number representing a syntactic distance} (Section \ref{sec:overview:distance}) \citep{shen2018neural, shen2018straight, shen2018ordered} or a \emph{sequence of parsing operations} (Section \ref{sec:overview:seq_model}) \citep{chelba1997structured, chelba2000structured, dyer2016recurrent}. We proceed to describe our results for each architecture in turn.

\subsection{Syntactic distance} \label{sec:overview:distance}

\emph{Syntactic distance}-based neural parsers train a neural network to learn a distance for each pair of adjacent words, depending on the context surrounding the pair of words under consideration.
The distances are then used to induce a tree structure \citep{shen2018neural, shen2018straight}.

For a sentence $W = w_1 w_2 ... w_n$, the syntactic distance between $w_{t-1}$ and $w_t$ ($2 \le t \le n$) is defined as 
$d_t = d(w_{t-1}, w_t \, | \, c_t)$, where 
$c_t$ is the context that $d_t$ takes into consideration 
\footnote{Note that this is not a conditional distribution---we use this notation for convenience.}.
We will show that restricting the surrounding context either in directionality, or in size, results in a poor representational power, while full context confers essentially perfect representational power with respect to PCFGs.  

Concretely, if the context is full, we show: 
\begin{theorem*}[Informal, full context]
    \label{thm:distance_positive_informal}
    For sentence $W$ generated by any PCFG,
    if the computation of $d_t$ has as context the full sentence and the position index under consideration, i.e. $c_t = (W, t)$ and $d_t = d(w_{t-1}, w_t \, | \, c_t)$, 
    then $d_t$ can induce the maximum likelihood parse of $W$. 
\end{theorem*}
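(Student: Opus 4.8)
The plan is to exploit the fact that, with full context $c_t = (W,t)$, the map $d_t = d(w_{t-1}, w_t \mid W, t)$ may be an \emph{arbitrary} function of the entire sentence together with the split index $t$. This decouples the problem into two essentially independent pieces: (i) computing the maximum-likelihood parse tree $T^\star$ of $W$, and (ii) exhibiting, for a \emph{fixed} binary tree, a choice of distances whose induced tree equals that tree. Piece (i) is immediate: the Viterbi/CYK dynamic program computes $T^\star$ as a deterministic function of $W$ (assuming without loss of generality that the grammar is in Chomsky normal form, so that $T^\star$ is binary; a general PCFG can be binarized in the standard way and the un-binarized optimum recovered afterwards). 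Since $d$ is allowed to depend on all of $W$, it may internally run this dynamic program and then read off a distance consistent with $T^\star$. Thus the entire content of the theorem lies in piece (ii), which is purely a statement about representability.

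First I would establish the structural lemma: \emph{for any binary tree $T$ over the leaves $w_1 \dots w_n$, there is an assignment of distances $d_2, \dots, d_n$ whose top-down induction procedure reconstructs $T$.} The starting point is the bijection between the $n-1$ split points (adjacent pairs $(w_{t-1}, w_t)$) and the $n-1$ internal nodes of $T$: the split point $t$ corresponds to the unique internal node whose left subtree ends at $w_{t-1}$ and whose right subtree begins at $w_t$. I would then set $d_t$ to be any strictly decreasing function of the \emph{depth} (distance from the root) of the internal node associated with $t$, e.g. $d_t = -\operatorname{depth}(t)$; only the relative order of the values $d_t$ affects the induction.

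The correctness of this assignment follows by induction on the recursive splitting. The induction procedure repeatedly picks the split point of maximum distance within the current contiguous span and recurses on the two resulting halves. The key invariant is that when the procedure is called on the span corresponding to a subtree $T'$ of $T$, the split points lying strictly inside this span are \emph{exactly} the internal nodes of $T'$, and among these the root of $T'$ is the unique node of minimum depth, hence the unique maximizer of $d_t$. Therefore the procedure splits the span precisely at the root of $T'$, reproducing the top node of $T'$ and recursing on the spans of its two children; by induction the reconstructed tree equals $T$. Uniqueness of the minimum-depth node inside any span also means ties never force an incorrect split, so no tie-breaking convention is needed.

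The main obstacle --- and the only step with genuine content --- is the structural lemma, and within it the span invariant: one must verify that the candidate split points available to a recursive call coincide exactly with the internal nodes of the corresponding subtree, and that this subtree's root is strictly shallower than every other internal node it contains. This is a standard but slightly fiddly fact about binary trees over an ordered leaf set. By contrast, the ``parsing'' half is essentially free once we grant full context: the expressive power needed to reconstruct $T^\star$ from $W$ is subsumed by allowing $d$ to be an arbitrary function of $(W, t)$, so no approximation or learning-theoretic argument enters --- the theorem reduces entirely to the combinatorial representability of binary trees by distances.
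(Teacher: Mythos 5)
Your proposal is correct and follows essentially the same route as the paper's proof: the paper likewise assigns $d_t$ as a decreasing function of the depth of the lowest common ancestor of $w_{t-1}$ and $w_t$ (which is exactly your split-point/internal-node bijection), relying on full context only to make $d$ an arbitrary function of $(W,t)$. Your write-up merely makes explicit two things the paper leaves implicit --- the span invariant justifying the top-down induction, and the fact that the maximum-likelihood tree itself is computable from $W$ --- so it is, if anything, a more detailed version of the same argument.
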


On the flipside, if the context is unidirectional (i.e. unbounded left-context from the start of the sentence, and even possibly with a bounded look-ahead), the representational power becomes severely impoverished:
\begin{theorem*}[Informal, limitation of left-to-right parsing via syntactic distance]
    There exists a PCFG $G$ such that 
    for any distance measure $d_t$ whose computation incorporates only bounded context in at least one direction (left or right), e.g.
    \vspace{-4mm}
    \begin{align*}
        c_t &= (w_0, w_1, ..., w_{t+L'}) \\
        d_t &= d(w_{t-1}, w_t \, | \, c_t) \\
    \end{align*}
    \vspace{-12mm}
    
    the probability that $d_t$ induces the max likelihood parse is arbitrarily low. 
\end{theorem*}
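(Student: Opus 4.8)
The plan is to exhibit a single PCFG whose sentences contain many \emph{independent} long-range bracketing decisions, each of which a bounded-context distance function can resolve only by guessing, and then amplify these independent failures into an exponentially small success probability. \textbf{Construction.} I would take a grammar generating sentences $B_1 B_2 \cdots B_m$ (with $m$ produced by $S \to B S \mid B$), where each block is a string $\tau_i\, a^{k_i}\, \sigma_i$: two delimiter symbols $\tau_i,\sigma_i \in \{x,y\}$ at its ends and a run of a neutral symbol $a$ in between. The internal bracketing of the $a$-run is forced by the \emph{parity} of the delimiters: left-branching iff $\tau_i \ne \sigma_i$, right-branching iff $\tau_i = \sigma_i$. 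This is expressible with four context-free rules for $B$ (one per value of $(\tau_i,\sigma_i)$) feeding a left- or right-recursive nonterminal that emits $a^{k_i}$, with $k_i$ itself geometric and hence unbounded. Choosing probabilities so $(\tau_i,\sigma_i)$ is uniform on $\{x,y\}^2$ and independent across blocks makes $\tau_i,\sigma_i$ independent uniform bits.

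\textbf{Structural facts.} First, each generated sentence has a \emph{unique} parse: the delimiters segment the string into blocks unambiguously, and each block's bracketing is pinned down by its two delimiters, so the max-likelihood parse coincides with this unique parse. Second, I will use the standard characterization that a span $[l,r]$ is a constituent of the tree induced by distances $d_2,\dots,d_n$ iff $\min(d_l, d_{r+1}) > \max(d_{l+1},\dots,d_r)$, with $d_1 = d_{n+1} = +\infty$. Applied to the two-element span of the first two $a$'s of block $i$, that span is a constituent of the ML tree iff block $i$ is left-branching, i.e.\ iff $\tau_i \ne \sigma_i$; and whether it is a constituent of the \emph{induced} tree depends only on the three distances $d_l, d_{l+1}, d_{l+2}$ at those positions.

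\textbf{Failure and amplification.} Fix any $d$ whose context is bounded by $L'$ in at least one direction, and call block $i$ \emph{hard} if $k_i > L' + 2$ (a positive fraction of blocks, since $k_i$ is unbounded). Suppose first the right context is bounded. For a hard block the three distances governing the first-two-$a$'s span have contexts reaching $\tau_i$ (to the left, always visible) but \emph{not} $\sigma_i$ (past the bounded right window), so the induced-tree constituency indicator for that span is a deterministic function of everything except $\sigma_i$. Correctness forces this indicator to equal $\mathbf{1}[\tau_i \ne \sigma_i]$, and since $\sigma_i$ is an independent uniform bit, the match probability is at most $1/2$ conditioned on all earlier blocks. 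If instead the left context is bounded, I run the mirror argument on the \emph{last} two $a$'s, whose constituency is controlled by the now-hidden left delimiter $\tau_i$ — this is exactly why delimiters sit on both ends. Chaining over hard blocks (each conditional failure probability $\le 1/2$) gives success probability $\le (1/2)^{|H|}$, and averaging over the lengths yields $(1 - q/2)^m$ with $q = \Pr[k_i > L'+2] > 0$. Since $G$ produces arbitrarily many blocks with positive probability, conditioning on enough blocks drives the success probability below any $\epsilon > 0$.

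\textbf{Main obstacle.} The delicate point is that $d$ is fully adversarial and need not respect block structure, so I cannot assume the induced tree ``isolates'' blocks before bracketing them. I sidestep this by reducing ``induced $=$ ML'' to a purely local statement about one tiny span per block via the constituency characterization, so each block contributes an independent coin flip regardless of the global distance assignment. The second subtlety is the quantifier order: $G$ must be fixed before $L'$, which forces the block lengths to be unbounded (so every finite $L'$ still leaves infinitely many hard blocks) and the delimiters to be two-sided (so one grammar defeats boundedness on either side).
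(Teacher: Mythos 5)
Your per-block local argument is sound, and it mirrors the paper's core idea (the disambiguating token lies beyond the context window): the delimiter-parity construction, the constituent characterization $\min(d_l,d_{r+1})>\max(d_{l+1},\dots,d_r)$, and the observation that for a block longer than the look-ahead window the constituency indicator cannot depend on the hidden delimiter are all correct. The gap is in the amplification step. The theorem concerns the \emph{unconditional} probability mass of sentences on which the induced tree equals the max-likelihood parse, so ``conditioning on enough blocks'' is not a legitimate move. For your fixed grammar that mass equals $\sum_m \Pr[M=m]\cdot\Pr[\mathrm{success}\mid M=m]$, and your own estimate only bounds it by $\sum_m \Pr[M=m](1-q/2)^m$, a positive constant (depending on $L'$), not a quantity below every $\epsilon$. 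Worse, it is genuinely bounded away from $0$: since $d_t$ may be an arbitrary function of its visible context, a parser can compute the full-context distances of Theorem \ref{thm:tree_given_position_aware} on every sentence short enough to fit inside its window, so its success probability is at least $\Pr[|s|\le L'+2]$, which for your grammar (geometric number of blocks, geometric lengths) is a fixed positive constant. Nothing in your argument forces failure on short sentences, and they carry constant probability mass.

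This also shows that your intended quantifier order --- fix $G$ first, then defeat every $L'$ and every $\epsilon$ --- is not merely unproven but unachievable: for any fixed PCFG, $\Pr[|s|\le L'+2]\to 1$ as $L'\to\infty$, so parsers with large enough windows succeed with probability approaching $1$. The paper's informal ``arbitrarily low'' is shorthand for the formal statement (Theorem \ref{thm:left_inf_right_finite}), where $L'$ and $\epsilon$ are fixed \emph{first} and the grammar $G_{m,L'}$ with $m>1/\epsilon$ is chosen afterwards; dependence of $G$ on both parameters is unavoidable. Your construction can be repaired within that quantifier order: force exactly $m$ blocks, each with exactly $L'+3$ filler symbols; then your chaining argument gives success probability at most $(1/2)^m<\epsilon$, yielding a valid alternative proof (amplification over $m$ independent binary long-range decisions, versus the paper's single $m$-way decision resolved by the last token). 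As written, however, the proposal does not establish the theorem.
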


In practice, for computational efficiency, parametrizations 
of syntactic distances fall into the above assumptions of restricted context \citep{shen2018neural}. 
This puts the ability of these models to learn a complex PCFG syntax into considerable doubt. 
For formal definitions, see Section \ref{sec:prelim:distance}.
For formal theorem statements and proofs, see Section \ref{sec:distance}.

Subsequently we consider ON-LSTM, an architecture proposed by \citet{shen2018ordered} improving their previous work \citep{shen2018neural}, 
which also is based on learning a syntactic distance, but in \citep{shen2018ordered} the distances are reduced from the values of a carefully structured master forget gate 
(see Section \ref{sec:on-lstm}).
While we show ON-LSTM can \emph{in principle} losslessly represent any parse tree (Theorem \ref{thm:on-lstm_lossless}), 
calculating the gate values in a left to right fashion 
(as is done in practice)
is subject to the same limitations as the syntactic distance approach: 

\begin{theorem*}[Informal, limitation of syntactic distance estimation based on ON-LSTM]
    There exists a PCFG $G$ for which the probability that 
    the syntactic distance converted from an ON-LSTM induces the max likelihood parse is arbitrarily low.
\end{theorem*}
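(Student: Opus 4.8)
The plan is to reduce this statement to the earlier limitation result for unidirectional syntactic distance (the informal theorem on left-to-right parsing via syntactic distance), exploiting the fact that the ON-LSTM, run as in practice, computes its gate values---and hence the distances it induces---in a single left-to-right pass. First I would recall Theorem~\ref{thm:on-lstm_lossless}, which shows that the \emph{representational} capacity of the ON-LSTM is not the issue: with full knowledge of the target tree, the structured master forget gate can be set so as to reproduce any parse losslessly. The obstruction must therefore come entirely from the \emph{online} manner in which the gate values are produced, not from any intrinsic limit on what the gate can encode.

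Second, I would make precise that the distance read off at position $t$ is a function of the left context only. In an ON-LSTM the master forget gate $\tilde f_t$ is computed from the current input embedding $x_t$ together with the previous hidden state $h_{t-1}$; unrolling the recurrence, $h_{t-1}$ is a deterministic function of $x_1,\dots,x_{t-1}$. Since the syntactic distance $d_t$ is obtained from $\tilde f_t$ (via its cumax activation and the expectation defining the effective number of active neurons), $d_t$ is a function of $(w_1,\dots,w_t)$ alone, with no right look-ahead. This is exactly an instance of the unidirectional context $c_t=(w_0,\dots,w_{t+L'})$ with $L'=0$ treated in the earlier theorem.

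Third, having identified the ON-LSTM distance as a bona fide unidirectional distance measure, and noting that the tree-induction step (top-down greedy splitting on the sequence $\{d_t\}$) is identical to that of the plain syntactic-distance parser, I would instantiate the same PCFG $G$ used in the proof of the earlier unidirectional theorem. That construction forces a parsing commitment at some position $t$ whose correct resolution depends on tokens occurring strictly to the right of $t$; because $d_t$ cannot observe those tokens, and the grammar assigns non-vanishing probability to several mutually incompatible continuations, no single value of $d_t$ can be simultaneously correct for all of them, so the probability of recovering the max-likelihood parse can be driven arbitrarily close to zero.

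The step I expect to be the main obstacle is the second one: one must verify carefully that the specific pipeline by which the ON-LSTM converts its \emph{structured} master forget gate into a scalar distance genuinely yields a function of the left context only, so that it slots cleanly into the hypothesis of the earlier theorem without introducing any hidden dependence on future tokens. Once this bookkeeping is complete, the hardness is inherited directly, and no new probabilistic analysis of the PCFG is required beyond what the unidirectional theorem already provides.
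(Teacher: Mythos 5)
Your proposal is correct and follows essentially the same route as the paper: the paper's proof likewise observes that, by Definition~\ref{def:master_forget_gates}, $\tilde{f}_{t,j}$ (and hence $\hat{d}_t^f$ via \eqref{eqn:on-lstm_to_distance}) is a function of $w_1,\dots,w_t$ only, and then invokes Theorem~\ref{thm:left_inf_right_finite} (with the left-to-right context, here with no look-ahead) to inherit the counterexample PCFG and its hardness. The left-context bookkeeping you flag as the main obstacle is dispatched in the paper by definition rather than by unrolling the recurrence, but the substance is identical.
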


For a formal statement, see Section \ref{sec:on-lstm} and in particular Theorem \ref{thm:on-lstm_to_distance}.

\subsection{Transition-based parsing} \label{sec:overview:seq_model}

In principle, the output $o_t$ at each position $t$ of a left-to-right neural models for syntactic parsing need not be restricted to a real-numbered distance or a carefully structured vector.
It can also be a combinatorial structure --- e.g. a sequence of 
\emph{transitions} \citep{chelba1997structured, chelba2000structured, dyer2016recurrent}.
We adopt a simplification of the neural parameterization in \citep{dyer2016recurrent} (see Definition \ref{def:adjoin}).

With full context, \citet{dyer2016recurrent} describes an algorithm to find a sequence of transitions to represent any parse tree, via a ``depth-first, left-to-right traversal" of the tree.
On the other hand, without full context, we prove that transition-based parsing suffers from the same limitations:

\begin{theorem*}[Informal, limitation of transition-based parsing without full context]
    There exists a PCFG $G$, 
    such that for any learned transition-based parser with bounded context in at least one direction (left or right),
    the probability that it returns the max likelihood parse is arbitrarily low.
\end{theorem*}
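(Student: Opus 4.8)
The plan is to build a single PCFG $G$, parametrized by an integer $k$, whose sentences split into $k$ independent \emph{ambiguous blocks}, and then to show that any bounded-context transition parser is forced to misparse a constant fraction of the blocks, so that the probability of recovering the full max-likelihood parse decays geometrically in $k$. This directly parallels the syntactic-distance lower bound: within each block I place a crucial bracketing decision together with a \emph{disambiguating token} that sits at distance larger than any fixed context bound from that decision, so that the correct transition at the decision point is fixed only by a token the bounded window cannot see. Concretely, I would fix two target sub-trees for a block---say a left-branching and a right-branching bracketing of a fixed string $a^m$---chosen so that (i) under the traversal of \citep{dyer2016recurrent} realized by Definition \ref{def:adjoin} the two sub-trees have distinct transition sequences, and (ii) the two sequences are forced to diverge at an operation that must be emitted \emph{before} the parser's window can reach the block's disambiguating token. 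The grammar would emit the two variants with probability close to $\tfrac12$ each, so that, conditioned on everything the window can see at the decision point, each target parse is the max-likelihood parse with probability near $\tfrac12$.

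Next I would run the indistinguishability step. Consider first the practically relevant right-bounded case, $c_t = (w_0,\dots,w_{t+L'})$ with look-ahead $L'$; take $m > L'$. Fix a block $i$ and take two sentences that agree on every token up to and including the window read at block $i$'s decision point, differing only in block $i$'s disambiguating token, which lies beyond the window. Because the transition parser chooses its operation at each position as a function of $c_t$, and all operations committed up to the decision point depend only on tokens inside the shared prefix, the parser reaches the decision point in an identical state and reads an identical window; hence it must emit the same operation in both sentences. By construction the two max-likelihood parses demand different operations there, so the parser is correct on block $i$ only when the realized variant matches its fixed choice, an event of probability at most $\tfrac12$ and independent across blocks. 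The left-bounded case $c_t = (w_{t-L},\dots,w_n)$ I would handle by a mirror-image gadget whose disambiguating token lies far to the \emph{left} of its crucial decision; making this rigorous requires the extra care discussed below.

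Amplification is then immediate: since the $k$ blocks are generated independently and a correct global parse requires every block to be parsed correctly, the probability that the parser returns the max-likelihood parse of $W \sim G$ is at most $\left(\tfrac12\right)^{k}$, which can be driven to $0$ by taking $k$ large. Because $G$ contains, in each block, gadgets that place the signal far in both directions, the same $G$ defeats every parser that is bounded in at least one direction, yielding the stated ``at least one direction'' conclusion.

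The step I expect to be the main obstacle is ensuring that the \emph{partial parse / stack}, rather than merely the token window, cannot smuggle the disambiguating token past the context bound. Unlike a syntactic distance, which is a pure function of a token window, a transition parser's state also carries the partial tree it has built, which in principle encodes earlier tokens. The crux is therefore two-fold: (a) verifying from Definition \ref{def:adjoin} that on the shared prefix the admissible operations and the resulting partial parses are identical for the two variants, so the state at the decision point genuinely reveals nothing about the signal; and (b) enforcing requirement (ii), namely that the two target transition sequences are forced to differ at an operation strictly \emph{before} the window (and the committed partial parse) can depend on the signal, rather than at a later operation the parser could legitimately defer until its window catches up. Pinning down this ``commitment before evidence'' property for the specific operation set of Definition \ref{def:adjoin}---and, in the left-bounded mirror where the signal has already been scanned, arguing that the forced prefix structure leaves no room to stash the signal in the stack---is where the construction must be engineered most carefully.
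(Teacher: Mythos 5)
Your right-bounded argument is sound and is, at its core, the same argument as the paper's proof of Theorem \ref{thm:seq_left_to_right_backtrack}: a deterministic parser, run on two sentences that agree on every token the context can see up to and including the forced decision point, reaches that point with identical transition history and identical window, hence emits the same transition, while the two max-likelihood parses demand different ones. The only real difference is the amplification device: the paper uses a single $m$-way ambiguity (the grammar $G_{m,L'}$ of Definition \ref{def:difficult_grammar}, giving success probability at most $1/m < \epsilon$), whereas you take a product of $k$ independent binary blocks, giving at most $2^{-k}$. Both have the same quantifier structure (the grammar depends on $\epsilon$ and on the look-ahead bound $L'$), and your conditional-independence-across-blocks step is correct, though the paper's single-ambiguity construction is simpler because it needs no such bookkeeping.

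The genuine gap is the left-bounded case, and it is not the ``extra care'' you anticipate --- under the paper's parser model your plan cannot be repaired. In the paper's definition the context at step $t$ contains all previously emitted transitions, and these cannot be prevented from carrying the disambiguating token past the window bound: in any mirror gadget, the \emph{correct} parse's own prefix transitions already record the signal, since before shifting $w_1 = c_k$ the parser must open the ancestor path of $c_k$ (e.g.\ NT($S$), NT($B_k$)), whose labels are indexed by $k$; a parser that commits these while $c_k$ is still visible can later read $k$ off its own history and make the late bracketing decision correctly. More fundamentally, in a PCFG the hidden early terminals influence the optimal completion only through the already-committed prefix structure (rule probabilities factorize given the opened spine), so the committed partial parse together with the visible suffix is a sufficient statistic for the max-likelihood continuation (up to tie-breaking); hence \emph{no} grammar defeats a left-to-right transition parser with bounded left token window, unbounded right context, and access to its own history. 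This is precisely why the paper never states that case: its formal theorem covers only bounded look-ahead for left-to-right parsers, and the ``bounded in at least one direction'' of the informal statement is interpreted as a \emph{right-to-left} parser with bounded look-behind, handled by a symmetric argument on a mirrored grammar (see the remark following Definition \ref{def:difficult_grammar}), not as a left-to-right parser whose left token window is truncated. Your proposal's claim that one grammar containing gadgets ``in both directions'' defeats every parser bounded on at least one side is therefore false as stated for the transition-based model, even though the analogous claim is unproblematic for pure syntactic-distance parsers, whose predictions are functions of the token window alone.
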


For a formal statement, see Section \ref{sec:seq_model}, and in particular Theorem \ref{thm:seq_left_to_right_backtrack}.

\begin{remark}
    There is no immediate connection between the syntactic distance-based approaches (including ON-LSTM) and the transition-based parsing framework, so 
    the limitations of transition-based parsing 
    does not directly imply the stated negative results for syntactic distance or ON-LSTM,
    and vice versa. 
\end{remark}

\subsection{The counterexample family}
\label{sec:overview:counterexample}

\begin{figure*}[t]
    \centering
    \includegraphics[width=10cm]{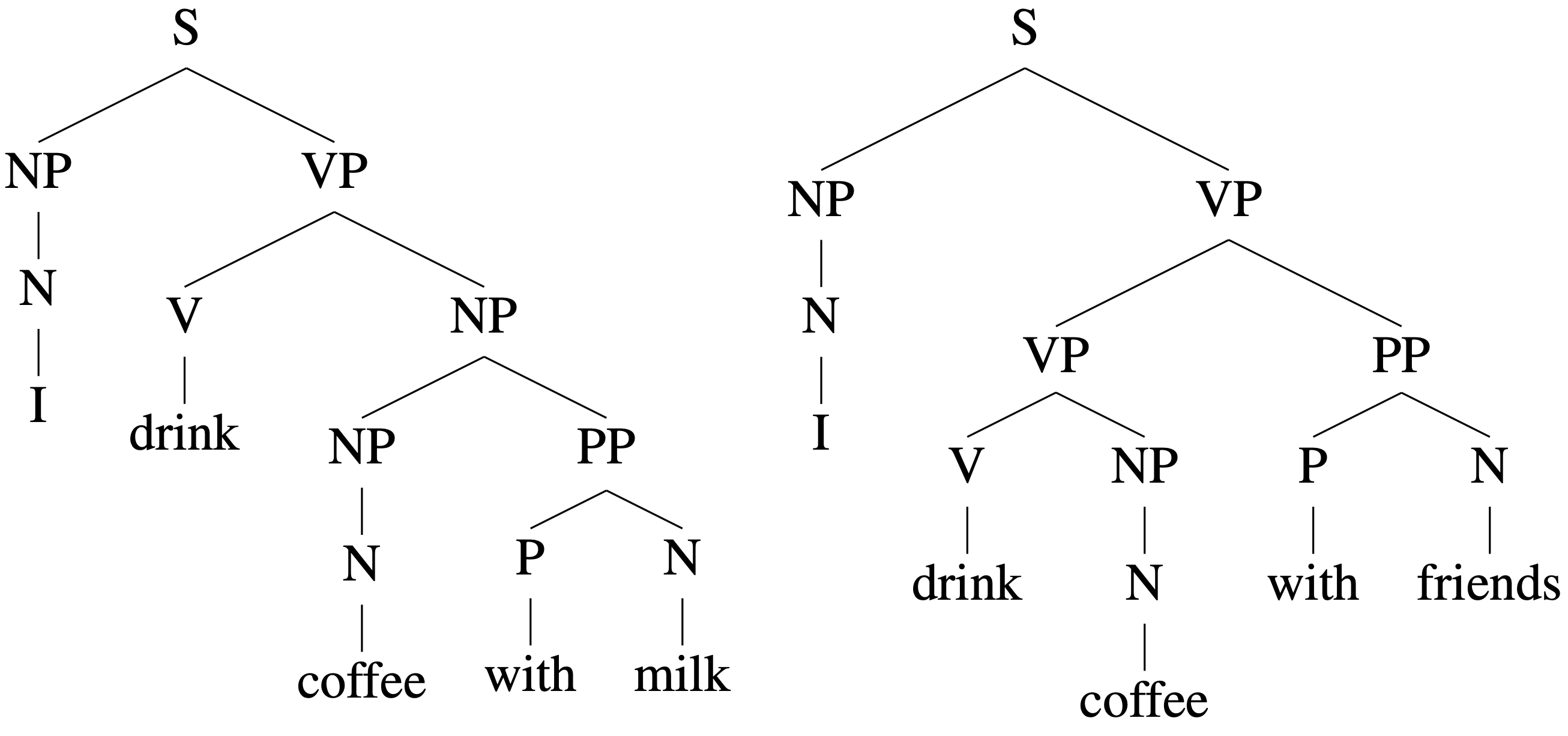}
    \caption{The parse trees of the two sentences: ``\textit{I drink coffee with milk.}'' and ``\textit{I drink coffee with friends.}''. 
    Their only difference occurs at their very last words, but their parses differ at some earlier words in each sentence
    }
    \label{fig:counterexample_motivation}
\end{figure*}

Most of our theorems proving limitations on bounded and unidirectional context are based on a PCFG family (Definition \ref{def:difficult_grammar}) which draws inspirations from natural language already suggested in \citep{htut2018grammar}: later words in a sentence can force different syntactic structures earlier in the sentence.
For example, consider the two sentences: ``\textit{I drink coffee with milk.}'' and ``\textit{I drink coffee with friends.}''
Their only difference occurs at their very last words, but their parses differ at some earlier words in each sentence, too, as shown in Figure \ref{fig:counterexample_motivation}.

To formalize this intuition, we define the following PCFG.

\begin{definition}[Right-influenced PCFG] \label{def:difficult_grammar}
    Let $m \ge 2, L' \ge 1$ be positive integers. 
    The grammar $G_{m,L'}$ has starting symbol $S$, other non-terminals 
    $$A_k, B_k, A_k^l, A_k^r, B_k' \text{ for all } k \in \{1, 2, ..., m\},$$ 
    and terminals 
    $$a_i \text{ for all } i \in \{1, 2, ..., m+1+L'\},$$ 
    $$c_j \text{ for all } j \in \{1, 2, ..., m\}.$$ 
    The rules of the grammar are 
    \begin{align*}
        S &\rightarrow A_k B_k, \forall k \in \{1,2, \dots, m\} \; \text{w. prob.} 1/m \\
        A_k &\rightarrow A_k^l A_k^r \quad \text{w. prob. } 1 \\
        A_k^l &\rightarrow^* a_1 a_2 ... a_k \quad \text{w. prob. } 1 \\
        A_k^r &\rightarrow^* a_{k+1} a_{k+2} ... a_{m+1} \quad \text{w. prob. } 1 \\
        B_k &\rightarrow^* B_k' c_k \quad \text{w. prob. } 1 \\
        B_k' &\rightarrow^* a_{m+2} a_{m+3} ... a_{m+1+L'} \quad \text{w. prob. } 1 \\
    \end{align*}
    in which $\rightarrow^*$ means that the left expands into the right through a sequence of rules that conform to the requirements of the Chomsky normal form (CNF, Definition \ref{def:cnf}).
    Hence the grammar $G_{m,L'}$ is in CNF.
   
   The language of this grammar is
    $$L(G_{m,L'}) {=} \{l_k {=} a_1 a_2 ... a_{m+1+L'} c_k : 1 \le k \le m \}.$$
\end{definition}

\begin{figure*}[t]
    \centering
    \includegraphics[width=10cm]{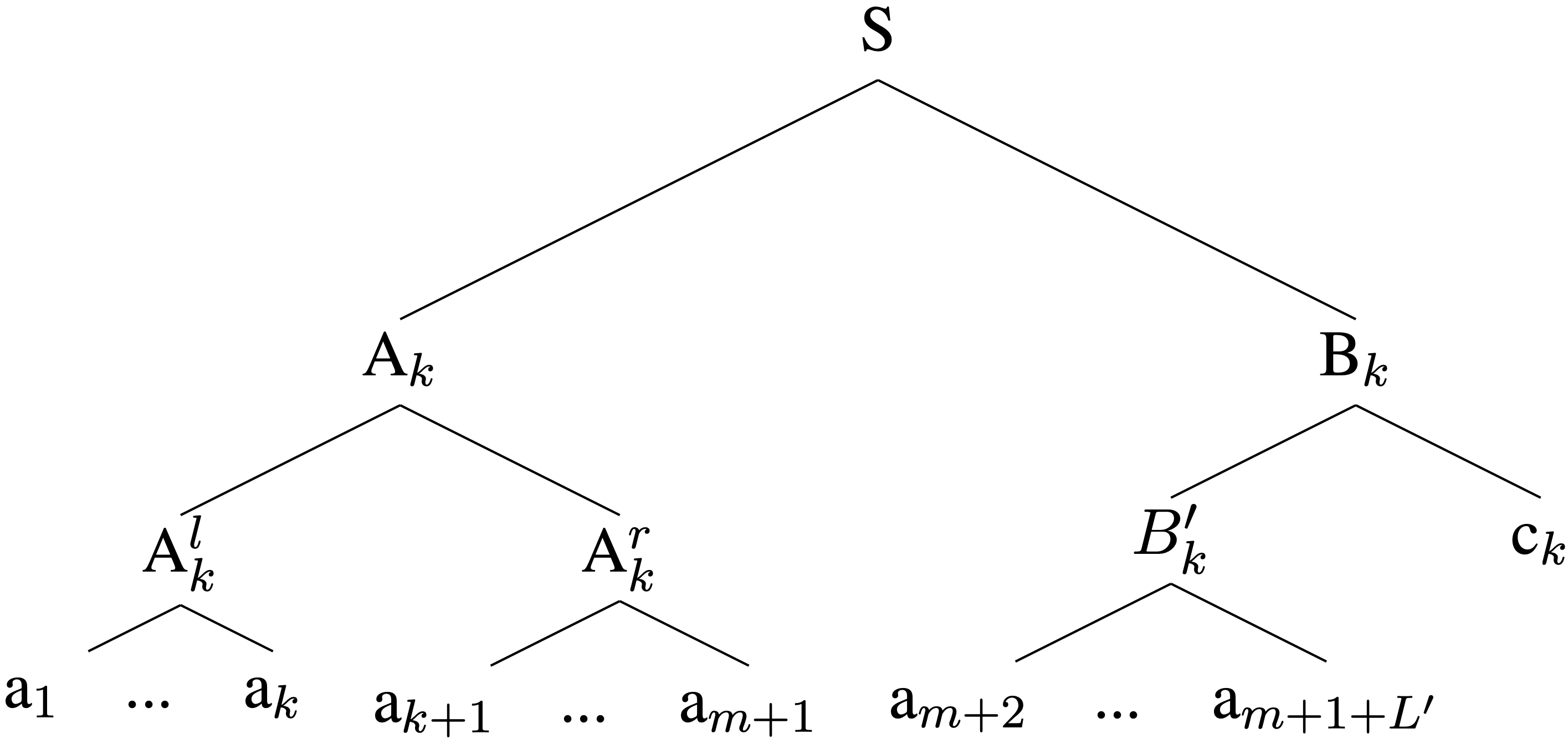}
    \caption{The structure of the parse tree of string $l_k = a_1 a_2 ... a_{m+1+L'} c_k \in L(G_{m,L'})$.
    Note that any $l_{k_1}$ and $l_{k_2}$ are almost the same except for the last token:
    the prefix $a_1 a_2 ... a_{m+1+L'}$ is shared among all strings in $L(G_{m,L'})$.
    However, their parses differ with respect to where $A_k$ is split. 
    The last token $c_k$ is unique to $l_k$ and hence determines the correct parse according to $G_{m,L'}$.
    }
    \label{fig:counterexample_parse}
\end{figure*}

The parse of an arbitrary $l_k$ is shown in Figure \ref{fig:counterexample_parse}.
Each $l_k$ corresponds to a unique parse determined by the choice of $k$.
The structure of this PCFG is such that for the parsing algorithms we consider that proceed in a ``left-to-right" fashion on $l_k $, 
before processing the last token $c_k$,
it cannot infer the syntactic structure of $a_1 a_2 ... a_{m+1}$ any better than randomly guessing one of the $m$ possibilities. This is the main intuition behind Theorems \ref{thm:left_inf_right_finite} and \ref{thm:seq_left_to_right_backtrack}.

\begin{remark}
    While our theorems focus on the limitation of ``left-to-right" parsing,
    a symmetric argument implies the same limitation of ``right-to-left" parsing.
    Thus, our claim is that \emph{unidirectional} context (in either direction) limits the expressive power of parsing models.
\end{remark}

\section{Related Works} \label{sec:related_works}

Neural models for parsing were first successfully implemented for supervised settings, e.g.  \citep{vinyals2015grammar, dyer2016recurrent, shen2018straight}. Unsupervised tasks remained seemingly out of reach, until the proposal of the 
Parsing-Reading-Predict Network (PRPN) by \citet{shen2018neural}, 
whose performance was thoroughly verified by extensive experiments in \citep{htut2018grammar}. 
The follow-up paper \citep{shen2018ordered} introducing the ON-LSTM architecture simplified radically the architecture in \citep{shen2018neural}, while still ultimately attempting to fit a distance metric with the help of carefully designed master forget gates. 
Subsequent work by \citet{kim2019compound}
departed from the usual way neural techniques are integrated in NLP, with great success: they proposed a neural parameterization for the EM algorithm for learning a PCFG, but in a manner that leverages semantic information as well --- 
achieving a large improvement on unsupervised parsing tasks.\footnote{By virtue of not relying on bounded or unidirectional context, the Compound PCFG \citep{kim2019compound} eschews the techniques in our paper. Specifically, by employing a bidirectional LSTM inference network in the process of constructing a tree given a sentence, the parsing is no longer ``left-to-right".
}  

In addition to constituency parsing, dependency parsing is another common task for syntactic parsing,
but for our analyses on the ability of various approaches to represent the max-likelihood parse of sentences generated from PCFGs,
we focus on the task of constituency parsing.
Moreover, it's important to note that there is another line of work aiming to probe the ability of models trained without explicit syntactic consideration (e.g. BERT) to nevertheless discover some (rudimentary) syntactic elements   \citep{bisk2015probing, linzen2016assessing, choe2016parsing, kuncoro2018lstms, williams2018latent, goldberg2019assessing, htut2019attention, hewitt-manning-2019-structural, reif2019visualizing}. However, to-date, we haven't been able to extract parse trees achieving scores that are close to the oracle binarized trees on standard benchmarks \citep{kim2019unsupervised, kim2019compound}.  

Methodologically, our work is closely related to a long line of works aiming to characterize the representational power of neural models (e.g. RNNs, LSTMs) through the lens of formal languages and formal models of computation.
Some of the works of this flavor are empirical in nature (e.g. LSTMs have been shown to possess stronger abilities to recognize some context-free language and even some context-sensitive language, compared with simple RNNs \citep{gers2001lstm, suzgun2019lstm} or GRUs \citep{weiss2018practical, suzgun2019lstm});
some results are theoretical in nature (e.g. \citet{siegelmann1992rnn}'s proof  that with unbounded precision and unbounded time complexity, RNNs are Turing-complete; related results investigate RNNs with bounded precision and computation time \citep{weiss2018practical}, as well as memory  \citep{merrill2019sequential, hewitt2020rnns}. Our work contributes to this line of works, 
but focuses on the task of syntactic parsing instead.

\section{Preliminaries} \label{sec:prelim}

In this section, we define some basic concepts and introduce the architectures we will consider.

\subsection{Probabilistic context-free grammar} 

First recall several definitions around formal language, especially probabilistic context free grammar: 
\begin{definition}[Probabilistic context-free grammar (PCFG)]
    \label{def:pcfg}
    Formally, a PCFG \citep{chomsky1956three} is a 5-tuple $G = (\Sigma, N, S, R, \Pi)$ in which $\Sigma$ is the set of terminals, $N$ is the set of non-terminals, $S \in N$ is the start symbol, $R$ is the set of production rules of the form $r = (r_L \rightarrow r_R)$, where $r_L \in N$, $r_R$ is of the form $B_1 B_2 ... B_m$, $m \in \mathbb{Z_+}$, and $\forall i \in \{1,2,...,m\}, B_i \in (\Sigma \cup N)$. Finally, $\Pi: R \mapsto [0, 1]$ is the rule probability function, in which for any
    $$r = (A \rightarrow B_1 B_2 ... B_m) \in R,$$ 
    $\Pi(r)$ is the conditional probability 
    $$P(r_R = B_1 B_2 ... B_m \, | \, r_L = A).$$
\end{definition}

\begin{definition}[Parse tree]
    \label{def:parse_tree}
    Let $T_G$ denote the set of parse trees that $G$ can derive. 
    Each $t \in T_G$ is associated with $\verb|yield(t)| \in \Sigma^*$, the sequence of terminals composed of the leaves of $t$ and $P_T(t) \in [0, 1]$, the probability of the parse tree, defined by the product of the probabilities of the rules in the derivation of $t$.
\end{definition}

\begin{definition}[Language and sentence]
    \label{def:language_and_sentence}
    The language of $G$ is 
    $$L(G) = \{s \in \Sigma^*: \exists t \in T_G, \verb|yield(t)| = s\}.$$
    Each $s \in L(G)$ is called a sentence in $L(G)$, and is associated with the set of parses $T_G(s) = \{t \in T_G \, | \, \verb|yield(t)| = s\}$, the set of max likelihood parses, $\argmax_{t \in T_G(s)} P_T(t)$, 
    and its probability $P_S(s) = \sum_{t \in T_G(s)} P_T(t)$. 
\end{definition}

\begin{definition}[Chomsky normal form (CNF)]
    \label{def:cnf}
    A PCFG $G = (\Sigma, N, S, R, \Pi)$ is in CNF \citep{chomsky1959properties} if we require, in addition to Definition \ref{def:pcfg}, that each rule $r \in R$ is in the form $A \rightarrow B_1 B_2$ where $B_1, B_2 \in N \setminus \{S\}$; $A \rightarrow a$ where $a \in \Sigma, a \ne \epsilon$; or $S \rightarrow \epsilon$ which is only allowed if the empty string $\epsilon \in L(G)$. 
    
    Every PCFG $G$ can be converted into a PCFG $G'$ in CNF such that $L(G) = L(G')$ \citep{hopcroft2006book}.
\end{definition}

\subsection{Syntactic distance} \label{sec:prelim:distance}

The Parsing-Reading-Predict Networks (PRPN) \citep{shen2018neural} is one of the leading approaches to unsupervised constituency parsing.
The parsing network (which computes the parse tree, hence the only part we focus on in our paper) 
is a convolutional network that computes the syntactic distances $d_t = d(w_{t-1}, w_t)$ (defined in Section \ref{sec:overview:distance}) based on the past $L$ words.
A deterministic greedy tree induction algorithm 
is then used to produce a parse tree as follows.  
First, we split the sentence $w_1 ...w_n$ into two constituents, $w_1 ... w_{t-1}$ and $w_t ... w_n$, where $t \in \mbox{argmax} \{d_t\}_{t=2}^n$ and form the left and right subtrees of $t$. We recursively repeat this procedure for the newly created constituents. An algorithmic form of this procedure is included as Algorithm \ref{alg:tree_induction} in Appendix \ref{sec:appendix:tree_induction}.

Note that, due to the deterministic nature of the tree-induction process, the ability of PRPN to learn a PCFG is completely contingent upon learning a good syntactic distance.

\subsection{The ordered neuron architecture} \label{sec:prelim:on-lstm}

Building upon the idea of representing the syntactic information with a real-valued distance measure at each position, a simple extension is to associate each position with a learned vector, and then use the vector for syntactic parsing. The ordered-neuron LSTM (ON-LSTM, \citealp{shen2018ordered}) proposes that the nodes that are closer to the root in the parse tree generate a longer span of terminals, and therefore should be less frequently ``forgotten'' than nodes that are farther away from the root.
The difference in the frequency of forgetting is captured by a carefully designed master forget gate vector $\tilde{f}$, as shown in Figure \ref{fig:shen2019intuition} (in Appendix \ref{sec:appendix:shen2019intuition}). 
Formally: 

\begin{definition}[Master forget gates, \citealp{shen2018ordered}]
    \label{def:master_forget_gates}
    Given the input sentence $W = w_1 w_2 ... w_n$ and a trained ON-LSTM, 
    running the ON-LSTM on $W$ gives the master forget gates, which are a sequence of $D$-dimensional vectors $\{\tilde{f}_t\}_{t=1}^n$,
    in which at each position $t$,
    $\tilde{f}_t = \tilde{f}_t(w_1, ..., w_t) \in [0, 1]^D$. 
    Moreover, let $\tilde{f}_{t,j}$ represent the $j$-th dimension of $\tilde{f}_t$. 
    The ON-LSTM architectures requires that 
    $\tilde{f}_{t,1} = 0$, $\tilde{f}_{t,D} = 1$, and
    $$\forall i < j, \quad \tilde{f}_{t,i} \le \tilde{f}_{t,j}.$$
\end{definition}

When parsing a sentence, the real-valued master forget gate vector $\tilde{f}_t$ at each position $t$ is reduced to a single real number representing the syntactic distance $d_t$ at position $t$ (see \eqref{eqn:on-lstm_to_distance}) \citep{shen2018neural}. 
Then, use the syntactic distances to obtain a parse.

\subsection{Transition-based parsing} \label{sec:prelim:seq_model}

In addition to outputting a single real numbered distance or a vector at each position $t$,
a left-to-right model can also parse a sentence by outputting a sequence of ``transitions" at each position $t$, 
an idea proposed in some traditional parsing approaches \citep{sagae2005classifier, chelba1997structured, chelba2000structured},
and also some more recent neural parameterization \citep{dyer2016recurrent}.

We introduce several items of notation:
\begin{itemize}
    \item $z_i^t$: the $i$-th transition 
    performed when reading in $w_t$, the $t$-th token of the sentence $$W = w_1 w_2 ... w_n.$$
    \vspace{-8mm}
    \item $N_t$: the number of transitions 
    performed between reading in the token $w_t$ and reading in the next token $w_{t+1}$. 
    \vspace{-3mm}
    \item $Z_t$: the sequence of transitions 
    after reading in the prefix $w_1 w_2 ... w_t$ of the sentence.
    $$Z_t = \{(z_1^j, z_2^j, ..., z_{N_j}^j) \, | \, j=1..t \}.$$
    \vspace{-8mm}
    \item $Z$: the parse of the sentence $W$. 
    $Z = Z_n$.
\end{itemize}

We base our analysis on the approach introduced in the \emph{parsing} version of \citep{dyer2016recurrent},
though that work additionally proposes a \emph{generator} version. 
\footnote{\citet{dyer2016recurrent} additionally proposes some \emph{generator} transitions. For simplicity, we analyze the simplest form: we only allow the model to return one parse, composed of the \emph{parser} transitions, for a given input sentence.  
Note that this simplified variant still confers full representational power in the ``full context" setting (see Section \ref{sec:seq_model}). 
}

\begin{definition}[Transition-based parser] \label{def:left_to_right_tree_adjoining_parser}
    A \emph{transition-based parser} uses a stack (initialized to empty) and an input buffer (initialized with the sentence $w_1 ... w_t$).
    At each position $t$, 
    based on a context $c_t$,
    the parser outputs a sequence of parsing transitions $\{z_{i}^t\}_{i=1}^{N_t}$,
    where each $z_{i}^t$ can be one of the following transitions (Definition \ref{def:adjoin}).
    The parsing stops when the stack contains one single constituent, and the buffer is empty.
\end{definition}

\begin{definition}[Parser transitions, \citealp{dyer2016recurrent}]
    \label{def:adjoin}
    A parsing transition can be one of the following three types:
    \begin{itemize}
        \item NT(X) pushes a non-terminal X onto the stack.
        \item SHIFT: removes the first terminal from the input buffer and pushes onto the stack. 
        \item REDUCE: pops from the stack until an open non-terminal is encountered, then pops this non-terminal and assembles everything popped to form a new constituent, labels this new constituent using this non-terminal, and finally pushes this new constituent onto the stack.
    \end{itemize}
\end{definition}

In Appendix Section \ref{sec:appendix:parsing_transitions_examples}, we provide an example of parsing the sentence ``\textit{I drink coffee with milk}" using the set of transitions given by Definition \ref{def:adjoin}.

The different context specifications and the corresponding representational powers of the transition-based parser are discussed in Section \ref{sec:seq_model}.

\section{Representational Power of Neural Syntactic Distance Methods} \label{sec:distance}

In this section we formalize the results on syntactic distance-based methods. 
Since 
the tree induction algorithm
always generates a binary tree, we consider only PCFGs in Chomsky normal form (CNF) (Definition \ref{def:cnf}) so that the max likelihood parse of a sentence is also a binary tree structure.

To formalize the notion of ``representing'' a PCFG, we introduce the following definition:

\begin{definition}[Representing PCFG with syntactic distance] \label{def:pcfg_rep}
    Let $G$ be any PCFG in Chomsky Normal Form. A syntactic distance function $d$ is said to be able to {\bf $p$-represent} $G$ if 
    for a set of sentences in $L(G)$ whose total probability is at least $p$, $d$ can correctly induce the tree structure of the max likelihood parse of these sentences 
    without ambiguity.
\end{definition}

\begin{remark}
    Ambiguities could occur when, for example, there exists $t$ such that $d_t = d_{t+1}$.
    In this case,
    the tree induction algorithm
    would have to break ties when determining the local structure for $w_{t-1} w_t w_{t+1}$.
    We preclude this possibility in Definition \ref{def:pcfg_rep}.
\end{remark}

In the least restrictive setting, the whole sentence $W$, as well as the position index $t$ can be taken into consideration when determining each $d_t$. 
We prove that under this setting, there is a syntactic distance measure that can represent any PCFG.

\begin{theorem}[Full context]
    \label{thm:tree_given_position_aware}
    Let $c_t = (W, t)$. For each PCFG $G$ in Chomsky normal form, there exists a syntactic distance measure 
    $d_t = d(w_{t-1}, w_t \, | \, c_t)$
    that can 1-represent $G$.
\end{theorem}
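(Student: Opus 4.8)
The plan is to explicitly construct a syntactic distance function that, when fed into the greedy tree induction algorithm (Algorithm~\ref{alg:tree_induction}), reproduces the max-likelihood parse tree of any sentence $W$ drawn from $G$. Since the context is the full pair $c_t = (W,t)$, the distance function is allowed to see the entire sentence; thus I can first compute, as a preprocessing step that is a deterministic function of $W$, the max-likelihood parse tree $t^*(W)$ itself --- for instance via the probabilistic CYK algorithm, which for a CNF grammar computes $\argmax_{t \in T_G(W)} P_T(t)$ in polynomial time. (Ties among equiprobable parses can be broken by any fixed deterministic rule, say lexicographic order on tree encodings, so that $t^*(W)$ is well-defined; this lets us achieve $1$-representation in Definition~\ref{def:pcfg_rep}.) The real content of the theorem is therefore not the grammar-parsing but the claim that \emph{any} binary tree over the leaves $w_1 \dots w_n$ can be encoded by a single scalar distance at each adjacent pair in a way the greedy top-down splitting algorithm inverts correctly.

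To encode a given binary tree $t^*$ as distances, I would use the standard correspondence between binary trees and the \emph{heights} of internal splits. For each adjacent pair $(w_{t-1}, w_t)$ there is a unique lowest common ancestor node $v_t$ in $t^*$ that separates $w_{t-1}$ (in its left subtree) from $w_t$ (in its right subtree); equivalently, $v_t$ is the internal node at which the sentence is split between positions $t-1$ and $t$. I then define $d_t$ to be a monotone-decreasing function of the depth of $v_t$ in the tree --- concretely $d_t = \mathrm{depth}_{\max} - \mathrm{depth}(v_t)$, or more robustly $d_t = H - \mathrm{depth}(v_t)$ for a large enough constant $H$, so that splits closer to the root receive strictly larger distances. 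This is exactly the ``syntactic distance'' intuition of \citep{shen2018neural, shen2018straight}: the distance between two adjacent tokens reflects how high up in the tree one must climb to find their common ancestor.

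The key step is then to prove by induction on the recursion of the tree induction algorithm that greedily splitting at $\argmax_t d_t$ recovers $t^*$. The induction hypothesis is: whenever the algorithm is invoked on a contiguous span $w_i \dots w_j$ that is a constituent of $t^*$, the maximizer of $\{d_t\}_{t=i+1}^{j}$ is exactly the position at which $t^*$ splits that constituent into its two children. This holds because the node realizing that top-level split of the span $[i,j]$ is the shallowest internal node among all the $v_t$ for $t \in \{i+1,\dots,j\}$ (every other adjacent pair inside the span is separated lower down, inside one of the two children), so by monotonicity its distance is the unique maximum. Strict monotonicity of the depth-to-distance map guarantees the maximizer is unique, which discharges the no-ambiguity requirement flagged in the remark after Definition~\ref{def:pcfg_rep}. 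Recursing into the two resulting sub-spans, which are themselves constituents of $t^*$, completes the induction.

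The main obstacle --- and the only place requiring care --- is ensuring the uniqueness of the argmax at every recursive call, i.e.\ that no two distinct internal nodes along a root-to-split path collide in distance in a way that confuses the algorithm. The cleanest fix is to note that within any single constituent span, the top split is realized by a \emph{strictly} shallower node than every interior separator within that span, so assigning distances strictly decreasing in depth suffices locally even if nodes in \emph{disjoint} parts of the tree happen to share a depth; the greedy algorithm only ever compares distances \emph{within} a current span, never across disjoint constituents, so global depth collisions are harmless. Verifying this locality property of Algorithm~\ref{alg:tree_induction} is the crux; once established, the construction above yields a distance measure that $1$-represents $G$, proving the theorem.
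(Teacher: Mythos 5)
Your proposal is correct and takes essentially the same route as the paper's proof: both construct $d_t$ as a decreasing function of the depth of the lowest common ancestor of $w_{t-1}$ and $w_t$ in the max-likelihood parse tree, so that shallower splits receive larger distances. The only difference is that you spell out what the paper leaves implicit --- the inductive verification that Algorithm~\ref{alg:tree_induction} recovers the tree and the observation that the top split of any constituent span is strictly shallower than all other separators in that span (so the argmax is unambiguous) --- which strengthens rather than changes the argument.
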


\begin{proof}
    For any sentence $s = s_1 s_2 ... s_n \in L(G)$,
    let $T$ be its max likelihood parse tree.
    Since $G$ is in Chomsky normal form, $T$ is a binary tree.
    We will describe an assignment of $\{d_t: 2 \le t \le n \}$ such that 
    their order matches the level at which the branches split in $T$. 
    Specifically, $\forall t \in [2, n]$,
    let $a_t$ denote the lowest common ancestor of $w_{t-1}$ and $w_t$ in $T$.
    Let $d'_t$ denote the shortest distance between $a_t$ and the root of $T$.
    Finally, let $d_t = n - d'_t$.
    As a result, 
    $\{d_t: 2 \le t \le n \}$ induces $T$.
\end{proof}

\begin{remark}
    Since any PCFG can be converted to Chomsky normal form \citep{hopcroft2006book}, 
    Theorem \ref{thm:tree_given_position_aware} implies that given the whole sentence and the position index as the context, 
    the syntactic distance has sufficient representational power to capture any PCFG.
    It does not state, however, that the whole sentence and the position are the minimal contextual information needed for representability nor does it address training (i.e. optimization) issues.
\end{remark}

On the flipside, we show that restricting the context even mildly can considerably decrease the representational power. Namely, we show that if context is bounded \emph{even in a single direction} (to the left or to the right), there are PCFGs on which any syntactic distance will perform poorly
\footnote{In Theorem \ref{thm:left_inf_right_finite} we prove the more typical case, i.e. unbounded left context and bounded right context. The other case, i.e. bounded left context and unbounded right context, can be proved symmetrically.}. 
(Note in the implementation \citep{shen2018neural} the context only considers a bounded window to the left.)

\begin{theorem}[Limitation of left-to-right parsing via syntactic distance]
    \label{thm:left_inf_right_finite}
    Let $w_0 = \langle S \rangle$ be the sentence start symbol.
    Let the context 
    $$c_t = (w_0, w_1, ..., w_{t+L'}).$$ 
    $\forall \epsilon > 0$,
    there exists a PCFG $G$ in Chomsky normal form, 
    such that any syntactic distance measure 
    $d_t = d(w_{t-1}, w_t \, | \, c_t)$
    cannot $\epsilon$-represent $G$.
\end{theorem}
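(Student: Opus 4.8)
The plan is to instantiate the right-influenced grammar $G_{m,L'}$ of Definition \ref{def:difficult_grammar}, taking $L'$ to match the given look-ahead bound and choosing $m$ large enough that $1/m < \epsilon$ (so $m > 1/\epsilon$). The whole argument hinges on one structural feature of this family: every sentence $l_k = a_1 \cdots a_{m+1+L'} c_k$ shares the identical prefix $a_1 \cdots a_{m+1+L'}$, and the only symbol distinguishing $l_k$ from $l_{k'}$ is the final token $c_k$, located at position $m+2+L'$. At the same time, the parse of $l_k$ (Figure \ref{fig:counterexample_parse}) forces the constituent $A_k$, which spans positions $1$ through $m+1$, to split into $A_k^l = a_1\cdots a_k$ and $A_k^r = a_{k+1}\cdots a_{m+1}$ at the boundary between positions $k$ and $k+1$ --- a split location that is different for every $k$.

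First I would pin down which distances the greedy tree-induction procedure of Section \ref{sec:prelim:distance} actually uses to resolve the internal structure of $A_k$. Since that procedure recursively splits each contiguous span at the strict maximum of the distances interior to the span, correctly recovering the parse of $l_k$ requires the root split to fall at $d_{m+2}$ (isolating $A_k$ from $B_k$), and then, within the block of positions $1,\ldots,m+1$, the distance $d_{k+1}$ at the $a_k \mid a_{k+1}$ boundary to be the \emph{unique} maximum of $\{d_t : 2 \le t \le m+1\}$. If it is not the strict maximum, either a wrong split is chosen or a tie arises, and a tie produces exactly the kind of ambiguity that Definition \ref{def:pcfg_rep} forbids.

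Next I would show these prefix distances cannot depend on $k$ at all. For every $t \le m+1$ the context window is $c_t = (w_0, w_1, \ldots, w_{t+L'})$ with $t + L' \le m+1+L' < m+2+L'$, so it stops short of the distinguishing token $c_k$ and is composed entirely of shared prefix symbols. Hence, as functions of the input, $d_2, \ldots, d_{m+1}$ take identical values on all $m$ sentences $l_1, \ldots, l_m$, and in particular $\argmax_{2 \le t \le m+1} d_t$ is a single fixed index $k^\star + 1$, the same for every sentence. Combining the two steps, the necessary condition that $d_{k+1}$ be the strict interior maximum can hold for at most the one value $k = k^\star$; every other $l_k$ is parsed incorrectly or ambiguously. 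The total probability of correctly induced sentences is then at most the probability $1/m$ of the single surviving sentence, which is below $\epsilon$, so no such $d_t$ can $\epsilon$-represent $G_{m,L'}$.

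I expect the main obstacle to be the first step: rigorously arguing that a correct parse genuinely forces $d_{k+1}$ to be the strict maximum over positions $2,\ldots,m+1$, rather than merely satisfying some looser ordering of distances. This amounts to translating the recursive behavior of the induction algorithm into statements about the relative order of the syntactic distances, and checking that the root split at $d_{m+2}$ really does cause the recursion to descend into the span $1,\ldots,m+1$ as a single unit. Fortunately, because I only need a \emph{necessary} condition confined to the prefix distances, the distances $d_t$ for $t \ge m+2$ (which do see $c_k$) never enter the counting argument and can be left entirely unconstrained for the lower bound.
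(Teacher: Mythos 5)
Your proposal is correct and follows essentially the same route as the paper's own proof: instantiate $G_{m,L'}$ with $m > 1/\epsilon$, observe that the bounded look-ahead keeps every prefix distance $d_2,\ldots,d_{m+1}$ from seeing the distinguishing token $c_k$, so these distances are identical across all $l_k$ and the interior split can be correct for at most one $k$, giving probability at most $1/m < \epsilon$. Your write-up is in fact slightly more careful than the paper's in spelling out the necessary condition imposed by the greedy induction algorithm (root split at $d_{m+2}$, then a strict interior maximum at the $a_k \mid a_{k+1}$ boundary, which is $d_{k+1}$ in the paper's indexing convention rather than the $d_k$ the paper writes), but the substance is the same.
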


\begin{proof}
    Let $m > 1/\epsilon$ be a positive integer. 
    Consider the PCFG $G_{m,L'}$ in Definition \ref{def:difficult_grammar}.
    
    For any $k \in [m]$, consider the string $l_k \in L(G_{m,L'})$.
    Note that in the parse tree of $l_k$, the rule $S \rightarrow A_k B_k$ is applied.
    Hence, $a_k$ and $a_{k+1}$ are the unique pair of adjacent non-terminals in $a_1 a_2 ... a_{m+1}$ whose lowest common ancestor is the closest to the root in the parse tree of $l_k$.
    Then, in order for the syntactic distance metric $d$ to induce the correct parse tree for $l_k$, 
    $d_k$ must be the unique maximum in $\{ d_t: 2 \le t \le {m+1} \}$.
    
    However, $d$ is restricted to be in the form 
    $$d_t = d(w_{t-1}, w_t \, | \, w_0, w_1, ..., w_{t+L'}).$$
    Note that $\forall 1 \le k_1 < k_2 \le {m}$, 
    the first $m+1+L'$ tokens of $l_{k_1}$ and $l_{k_2}$ are the same,
    which implies that the inferred syntactic distances
    $$\{ d_t: 2 \le t \le {m+1} \}$$
    are the same for $l_{k_1}$ and $l_{k_2}$ at each position $t$.
    Thus, 
    it is impossible for $d$ to induce the correct parse tree for both $l_{k_1}$ and $l_{k_2}$. 
    Hence, $d$ is correct on at most one $l_k \in L(G_{m,L'})$, 
    which corresponds to probability at most $1/m < \epsilon$.
    Therefore, $d$ cannot $\epsilon$-represent $G_{m,L'}$.
\end{proof}

\begin{remark}
    In the counterexample, there are only $m$ possible parse structures for the prefix $a_1 a_2 ... a_{m+1}$.
    Hence, the proved fact that the probability of being correct is at most $1/m$ means that under the restrictions of unbounded look-back and bounded look-ahead, the distance cannot do better than random guessing for this grammar.
\end{remark}

\begin{remark}
    The above Theorem \ref{thm:left_inf_right_finite} formalizes the intuition discussed in \citep{htut2018grammar} outlining an intrinsic limitation of only considering bounded context in one direction.  
    Indeed, for the PCFG constructed in the proof,  
    the failure is a function of the context, not because of the fact that we are using a distance-based parser. 
\end{remark}

Note that as a corollary of the above theorem, if there is no context ($c_t = \verb|null|$) or the context is both bounded and unidirectional, i.e.
$$c_t = w_{t-L} w_{t-L+1} ... w_{t-1} w_t,$$ 
then there is a PCFG that cannot be $\epsilon$-represented by any such $d$.

\section{Representational Power of the Ordered Neuron Architecture} \label{sec:on-lstm}

In this section, we formalize the results characterizing the representational power of the ON-LSTM architecture. 
The master forget gates of the ON-LSTM,
$\{\tilde{f}_t\}_{t=2}^n$ in which each $\tilde{f}_t \in [0, 1]^D$,
encode the hierarchical structure of a parse tree, 
and \citet{shen2018ordered} proposes to carry out unsupervised constituency parsing via a reduction from the gate vectors to syntactic distances by setting: 

\begin{equation}
    \label{eqn:on-lstm_to_distance}
    \hat{d}_t^f = D - \sum_{j=1}^{D} \tilde{f}_{t,j} \text{ for } t = 2..n
\end{equation}

First we show that the gates in ON-LSTM \emph{in principle} form a lossless representation of any parse tree.

\begin{theorem}[Lossless representation of a parse tree]
    \label{thm:on-lstm_lossless}
    For any sentence $W = w_1 w_2 ... w_n$ with parse tree $T$ in any PCFG in Chomsky normal form, 
    there exists a dimensionality $D \in \mathbb{Z}_+$,
    a sequence of vectors $\{\tilde{f_t}\}_{t=2}^n$ in which each $\tilde{f_t} \in [0, 1]^D$, 
    such that the estimated syntactic distances via \eqref{eqn:on-lstm_to_distance}
    induce the structure of $T$. 
\end{theorem}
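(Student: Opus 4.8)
The plan is to reduce the statement to the full-context construction of Theorem \ref{thm:tree_given_position_aware} by exhibiting gate vectors whose scalar reduction \eqref{eqn:on-lstm_to_distance} reproduces exactly the integer distances used there. Recall that the proof of Theorem \ref{thm:tree_given_position_aware} assigns to each position $t$ an integer $d_t = n - d'_t$, where $d'_t$ is the depth of the lowest common ancestor of $w_{t-1}$ and $w_t$ in $T$, and shows that the family $\{d_t\}_{t=2}^n$ induces $T$. Hence it suffices to produce gates $\{\tilde f_t\}_{t=2}^n$ satisfying the ordered-neuron constraints of Definition \ref{def:master_forget_gates} for which $\hat d_t^f = d_t$ at every position.

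The key observation is that the reduction \eqref{eqn:on-lstm_to_distance} depends on $\tilde f_t$ only through the sum $\sum_{j=1}^D \tilde f_{t,j}$, and that a monotone $0/1$ \emph{step vector} realizes any prescribed integer sum while automatically respecting the ordering requirement $\tilde f_{t,i} \le \tilde f_{t,j}$ for $i < j$. Thus I would take $D = n+1$ and define, for each $t$,
\[
  \tilde f_{t,j} = \begin{cases} 0, & 1 \le j \le d_t,\\ 1, & d_t < j \le D. \end{cases}
\]
This vector is non-decreasing by construction. Since the full-context distances satisfy $1 \le d_t \le n = D-1$, the vector has at least one leading zero and one trailing one, so $\tilde f_{t,1} = 0$ and $\tilde f_{t,D} = 1$ as required by Definition \ref{def:master_forget_gates}. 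Its coordinate sum is $D - d_t$, and therefore \eqref{eqn:on-lstm_to_distance} yields $\hat d_t^f = D - (D - d_t) = d_t$.

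Because $\hat d_t^f = d_t$ for all $t$ and $\{d_t\}_{t=2}^n$ already induces $T$ by Theorem \ref{thm:tree_given_position_aware}, the estimated syntactic distances induce $T$, which completes the argument. The only steps requiring care are bookkeeping: verifying the range $1 \le d_t \le D-1$ so that both endpoint constraints hold (this is what pins down the choice $D = n+1$), and observing that the step-vector encoding is precisely what lets the scalar reduction of a monotone master forget gate coexist with an arbitrary target ordering of distances. There is no genuine analytic obstacle here—the entire content lies in recognizing that the lossless direction reduces to the full-context distance result, and that the monotone gate constraint is harmless because only the coordinate sum survives the reduction.
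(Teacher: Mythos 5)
Your proof is correct and takes essentially the same route as the paper's: reduce to Theorem \ref{thm:tree_given_position_aware}, then encode each integer distance as a monotone $0/1$ step vector whose coordinate sum recovers it under \eqref{eqn:on-lstm_to_distance}. The only differences are bookkeeping --- the paper first replaces the raw distances by their ranks and takes $D = n-1$ (a choice that can actually violate the endpoint condition $\tilde{f}_{t,D}=1$ at the position of maximal distance), whereas your padding to $D = n+1$ keeps the raw distances and explicitly verifies all the constraints of Definition \ref{def:master_forget_gates}, which is a slightly more careful execution of the same idea.
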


\begin{proof}
    By Theorem \ref{thm:tree_given_position_aware}, there is a syntactic distance measure $\{d_t\}_{t=2}^n$ that induces the structure of $T$
    (such that $\forall t, d_t \neq d_{t+1}$). 
    
    For each $t = 2..n$, 
    set $\hat{d_t} = k$ if $d_t$ is the $k$-th smallest entry in $\{d_t\}_{t=2}^n$,
    breaking ties arbitrarily.
    Then, each $\hat{d_t} \in [1, n-1]$,
    and $\{\hat{d_t}\}_{t=2}^n$ also induces the structure of $T$. 
    
    Let $D = n-1$.
    For each $t = 2..n$, 
    let $\tilde{f_t} = (0, ..., 0, 1, ..., 1)$ whose lower $\hat{d_t}$ dimensions are 0 and higher $D - \hat{d_t}$ dimensions are 1.
    Then, 
    $$\hat{d}_t^f = D - \sum_{j=1}^{D} \tilde{f}_{t,j} = D - (D - \hat{d_t}) = \hat{d_t}.$$
    Therefore, the calculated $\{\hat{d}_t^f\}_{t=2}^n$ induces the structure of $T$. 
\end{proof}

Although Theorem \ref{thm:on-lstm_lossless} shows the ability of the master forget gates to perfectly represent any parse tree, 
a \emph{left-to-right} parsing can be proved to be unable to return the correct parse with high probability.
In the actual implementation in \citep{shen2018ordered}, the (real-valued) master forget gate vectors $\{\tilde{f_t}\}_{t=1}^n$
are produced by 
feeding the input sentence $W = w_1 w_2 ... w_n$ to a model
trained with a language modeling objective. 
In other words, $\tilde{f}_{t,j}$ is calculated as a function of $w_1, ..., w_t$, rather than the entire sentence.  

As such, this left-to-right parser is subject to similar limitations as in Theorem \ref{thm:left_inf_right_finite}: 

\begin{theorem}[Limitation of syntactic distance estimation based on ON-LSTM]
    \label{thm:on-lstm_to_distance}
    For any $\epsilon > 0$,
    there exists a PCFG $G$ in Chomsky normal form, such that 
    the syntactic distance measure calculated with \eqref{eqn:on-lstm_to_distance}, $\hat{d}_t^f$, cannot $\epsilon$-represent $G$.
\end{theorem}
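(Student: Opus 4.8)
The plan is to reduce this statement to the counting argument already used for Theorem~\ref{thm:left_inf_right_finite}, by exploiting the fact that the ON-LSTM computes its master forget gates strictly left-to-right. The key observation is that the reduction \eqref{eqn:on-lstm_to_distance} is \emph{pointwise}: $\hat{d}_t^f$ is a function of $\tilde{f}_t$ alone, and by Definition~\ref{def:master_forget_gates} we have $\tilde{f}_t = \tilde{f}_t(w_1, \ldots, w_t)$. Composing these, $\hat{d}_t^f$ depends only on the prefix $w_1, \ldots, w_t$. Thus the ON-LSTM-derived distance is itself a syntactic distance with unbounded left context and \emph{no} right look-ahead, which is a strictly more restrictive special case of the context $c_t = (w_0, w_1, \ldots, w_{t+L'})$ assumed in Theorem~\ref{thm:left_inf_right_finite}.

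Given this, I would first fix $m > 1/\epsilon$ and take the counterexample grammar $G_{m,L'}$ of Definition~\ref{def:difficult_grammar} for any $L' \ge 1$. For each $k \in [m]$, the max-likelihood parse of $l_k = a_1 a_2 \cdots a_{m+1+L'} c_k$ applies the rule $S \to A_k B_k$, so among all adjacent pairs in the shared prefix $a_1 \cdots a_{m+1}$, the pair $a_k, a_{k+1}$ is the unique one whose lowest common ancestor is closest to the root. Equivalently, to induce the correct parse the estimated distances $\{\hat{d}_t^f : 2 \le t \le m+1\}$ must attain their unique maximum at the boundary determined by $k$.

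Next I would invoke the pointwise, left-to-right property established in the first step. Every position $t$ with $2 \le t \le m+1$ lies strictly before the distinguishing token $c_k$, and the prefixes $w_1, \ldots, w_t = a_1, \ldots, a_t$ coincide across all sentences $l_1, \ldots, l_m$. Hence $\tilde{f}_t$, and therefore $\hat{d}_t^f$, takes the same value at each such $t$ regardless of $k$. So the entire profile $\{\hat{d}_t^f : 2 \le t \le m+1\}$ is identical across all $l_k$, and its unique maximum sits at one fixed boundary. Since the correct boundary varies with $k$, the reduction \eqref{eqn:on-lstm_to_distance} can induce the max-likelihood parse for at most one value of $k$, a set of probability at most $1/m < \epsilon$, and therefore $\hat{d}_t^f$ cannot $\epsilon$-represent $G_{m,L'}$.

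I do not anticipate a genuine obstacle here: once the pointwise inheritance of the unidirectional-context property is in hand, the remainder is exactly the pigeonhole argument of Theorem~\ref{thm:left_inf_right_finite}. The only point requiring a little care is to confirm that \eqref{eqn:on-lstm_to_distance} does not covertly introduce any right context; but since it is applied independently at each $t$ to a gate vector $\tilde{f}_t$ that itself only saw $w_1, \ldots, w_t$, this is immediate, and in fact the ON-LSTM is more constrained (no look-ahead) than the setting of Theorem~\ref{thm:left_inf_right_finite}.
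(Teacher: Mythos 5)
Your proposal is correct and follows essentially the same route as the paper: the paper's proof likewise observes that, by Definition~\ref{def:master_forget_gates}, $\hat{d}_t^f$ is a function of $w_1,\ldots,w_t$ only, and then invokes Theorem~\ref{thm:left_inf_right_finite} directly (your version merely inlines that theorem's pigeonhole argument on $G_{m,L'}$ rather than citing it). No gaps; the reduction is exactly the intended one.
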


\begin{proof}
    Since by Definition \ref{def:master_forget_gates},  $\tilde{f}_{t,j}$ is a function of $w_1, ..., w_t$,
    the estimated syntactic distance $\hat{d}_t^f$ is also a function of $w_1, ..., w_t$.
    By Theorem \ref{thm:left_inf_right_finite}, even with unbounded look-back context $w_1, ..., w_t$, there exists a PCFG for which the probability that $\hat{d}_t^f$ induces the correct parse is arbitrarily low. 
\end{proof}

\section{Representational Power of Transition-Based Parsing} \label{sec:seq_model}

In this section, we analyze a transition-based parsing framework inspired by \citep{dyer2016recurrent, chelba2000structured, chelba1997structured}. 

Again, we proceed to say first that ``full context'' confers full representational power.   
Namely, using the terminology of Definition \ref{def:left_to_right_tree_adjoining_parser}, 
we let the context $c_t$ at each position $t$ be the whole sentence $W$ and the position index $t$.
Note that any parse tree can be generated by a sequence of transitions defined in Definition \ref{def:adjoin}. 
Indeed, \citet{dyer2016recurrent} describes an algorithm to find such a sequence of transitions via a ``depth-first, left-to-right traversal" of the tree.

Proceeding to limited context, in the setting of typical left-to-right parsing, 
the context $c_t$ consists of all current and past tokens $\{w_j\}_{j=1}^{t}$ 
and all previous parses $\{(z_{1}^j, ..., z_{N_j}^j)\}_{j=1}^{t}$.
We'll again prove even stronger negative results, where we allow an optional look-ahead to $L'$ input tokens to the right.

\begin{theorem}[Limitation of transition-based parsing without full context]
    \label{thm:seq_left_to_right_backtrack}
    For any $\epsilon > 0$,
    there exists a PCFG $G$ in Chomsky normal form, such that for any learned transition-based parser
    (Definition \ref{def:left_to_right_tree_adjoining_parser}) based on context
    $$c_t = (\{w_j\}_{j=1}^{t+L'}, \{(z_{1}^j, ..., z_{N_j}^j)\}_{j=1}^{t}),$$
    the sum of the probabilities of the sentences in $L(G)$ for which the 
    parser returns the maximum likelihood parse is less than $\epsilon$.
\end{theorem}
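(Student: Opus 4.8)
The plan is to reuse the counterexample grammar $G_{m,L'}$ of Definition~\ref{def:difficult_grammar} with $m > 1/\epsilon$, adapting the syntactic-distance argument of Theorem~\ref{thm:left_inf_right_finite} to the stack-based transition semantics of Definition~\ref{def:adjoin}. Recall every sentence $l_k = a_1 \cdots a_{m+1+L'} c_k$ shares the identical prefix $a_1 \cdots a_{m+1+L'}$ and differs only in its final token $c_k$, which sits at position $m+2+L'$. First I would record the crucial positional fact: for any step $t \le m+1$, the context $c_t = (\{w_j\}_{j=1}^{t+L'}, \{(z_{1}^j,\dots,z_{N_j}^j)\}_{j=1}^{t})$ sees at most token position $t+L' \le m+1+L' < m+2+L'$, so it never contains the discriminating token $c_k$.

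Second, I would show by induction on $t$ that the transition blocks $(z_{1}^t,\dots,z_{N_t}^t)$ for $t = 1,\dots,m+1$ coincide across all $m$ sentences. The base case and inductive step share the same reasoning: the token portion of $c_t$ is the common prefix, and by the inductive hypothesis the previously emitted transitions are identical as well; since the parser's action at step $t$ is a function of $c_t$, it must emit the same block at step $t$ for every $l_k$. Thus the entire transition history through the reading of $a_{m+1}$ is $k$-independent.

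Third --- and this is the heart of the argument --- I would argue that the portion of the parse covering the prefix $a_1\cdots a_{m+1}$ is fully committed by these $k$-independent transitions, hence can match at most one of the $m$ distinct correct parses. In the parse of $l_k$, the phrase $A_k$ splits into $A_k^l$ (yielding $a_1\cdots a_k$) and $A_k^r$ (yielding $a_{k+1}\cdots a_{m+1}$), and for distinct $k$ these bracketings differ. Using the REDUCE semantics, the constituent whose yield ends exactly at $a_k$ must be closed before $a_{k+1}$ is shifted onto the stack, i.e.\ by a REDUCE appearing among $\{z_{i}^{t}\}$ for some $t \le k \le m$; otherwise $a_{k+1}$ would be trapped inside that constituent, since transitions are append-only and a pushed token cannot be re-bracketed. (Equivalently, the labelled non-terminals $A_k$ and $A_k^l$ are pushed by NT actions before $a_1$ is shifted, and these labels already encode $k$.) Either way the split point is pinned down entirely within steps $1,\dots,m+1$.

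Finally, combining the second and third points: the step-$\le m+1$ transitions are $k$-independent, yet the correct parse demands a $k$-dependent split of $a_1\cdots a_{m+1}$, so the parser can realize the correct prefix structure for at most one value of $k$. Since $P_S(l_k) = 1/m$ for each $k$, the total probability of sentences parsed correctly is at most $1/m < \epsilon$. The main obstacle I anticipate is making the third step airtight: I must rule out any \emph{deferral} strategy in which the parser shifts $a_1,\dots,a_{m+1}$ noncommittally and then re-brackets once $c_k$ becomes visible at step $m+2$. The stack discipline of Definition~\ref{def:adjoin} is exactly what forbids this --- a REDUCE can only group a contiguous suffix back to the most recent open non-terminal, and neither SHIFT nor NT can be retracted --- so every constituent boundary interior to $a_1\cdots a_{m+1}$ is irrevocably fixed before the distinguishing token is read.
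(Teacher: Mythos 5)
Your proof is correct and takes essentially the same approach as the paper's: the same counterexample $G_{m,L'}$ with $m > 1/\epsilon$, the same observation that for $t \le m+1$ the context $c_t$ cannot contain the discriminating token $c_k$, and the same counting conclusion that the parser can therefore be correct on at most one $l_k$, i.e.\ with probability at most $1/m < \epsilon$. If anything, you are more explicit than the paper, whose proof simply asserts that ``different $l_k$ requires a different sequence of transitions within the first $m+1$ input tokens'': your induction showing the transition blocks coincide across all $l_k$, and your argument that the stack discipline irrevocably commits the split of $a_1 \cdots a_{m+1}$ before $c_k$ is read, supply precisely the details that assertion glosses over.
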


\begin{proof}
    Let $m > 1/\epsilon$ be a positive integer. 
    Consider the PCFG $G_{m,L'}$ in Definition \ref{def:difficult_grammar}.
    
    Note that $\forall k, S \rightarrow A_k B_k$ is applied to yield string $l_k$. 
    Then in the parse tree of $l_k$, $a_k$ and $a_{k+1}$ are the unique pair of adjacent terminals in $a_1 a_2 ... a_{m+1}$ whose lowest common ancestor is the closest to the root.
    Thus, different $l_k$ requires a different sequence of transitions within the first $m+1$ input tokens, i.e. $\{z_i^t\}_{i \ge 1, \, 1 \le t \le m+1}$.
    
    For each $w \in L(G_{m,L'})$, before the last token $w_{m+2+L'}$ is processed, based on the common prefix $w_1 w_2 ... w_{m+1+L'} = a_1 a_2 ... a_{m+1+L'}$, it is equally likely that $w = l_k, \forall k$, w. prob. $1/m$ each.
    
    Moreover, when processing $w_{m+1}$, the bounded look-ahead window of size $L'$ does not allow access to the final input token $a_{m+2+L'} = c_k$.
    
    Thus, $\forall 1 \le k_1 < k_2 \le {m}$, 
    it is impossible for the parser to return the correct parse tree for both $l_{k_1}$ and $l_{k_2}$ without ambiguity.
    Hence, the parse is correct on at most one $l_k \in L(G)$, 
    which corresponds to probability at most $1/m < \epsilon$.
\end{proof}

\section{Conclusion} \label{sec:conclusion}

In this work, we considered the representational power of two frameworks for constituency parsing prominent in the literature, based on learning a syntactic distance and learning a sequence of iterative transitions to build the parse tree --- in the sandbox of PCFGs. 
In particular, we show that if the context for calculating distance/deciding on transitions is limited at least to one side (which is typically the case in practice for existing architectures), there are PCFGs for which no good distance metric/sequence of transitions can be chosen to construct the maximum likelihood parse. 

This limitation was already suspected in \citep{htut2018grammar} as a potential failure mode of leading neural approaches like \citep{shen2018neural,shen2018ordered} and we show formally that this is the case. The PCFGs with this property track the intuition that bounded context methods will have issues when the parse at a certain position depends heavily on latter parts of the sentence. 

The conclusions thus suggest re-focusing our attention on methods like \citep{kim2019compound} which have enjoyed greater success on tasks like unsupervised constituency parsing, and do not fall in the paradigm analyzed in our paper. 
A question of definite further interest is how to augment models that have been successfully scaled up (e.g. BERT) in a principled manner with syntactic information, such that they can capture syntactic structure (like PCFGs). The other question of immediate importance is to understand the interaction between the syntactic and semantic modules in neural architectures --- information is shared between such modules in various successful architectures, e.g. \citep{dyer2016recurrent, shen2018neural, shen2018ordered, kim2019compound}, and the relative pros and cons of doing this are not well understood. 
Finally, our paper purely focuses on representational power, and does not consider algorithmic and statistical aspects of training. 
As any model architecture is associated with its distinct optimization and generalization considerations, 
and natural language data necessitates the modeling of the interaction between syntax and semantics,
those aspects of considerations are well beyond the scope of our analysis in this paper using the controlled sandbox of PCFGs, and are interesting directions for future work.

\bibliographystyle{plainnat}
\bibliography{references, anthology.bib}

\newpage
\appendix

\section{Tree Induction Algorithm Based on Syntactic Distance}
\label{sec:appendix:tree_induction}

The following algorithm is proposed in \citep{shen2018neural} to create a parse tree based on a given syntactic distance.

\begin{algorithm}[h] 
    \SetAlgoLined
    \KwData{Sentence $W = w_1 w_2 ... w_n$, syntactic distances $d_t = d(w_{t-1}, w_t \, | \, c_t)$, $2 \le t \le n$}
    \KwResult{A parse tree for $W$}
    Initialize the parse tree with a single node $n_0 = w_1 w_2 ... w_n$\;
    \While{$\exists$ leaf node $n = w_i w_{i+1} ... w_j$ where $i < j$}{
        Find $k \in \argmax_{i+1 \le k \le j} d_k$ \;
        Create the left child $n_l$ and the right child $n_r$ of $n$ \;
        $n_l \leftarrow w_i w_{i+1} ... w_{k-1}$ \;
        $n_r \leftarrow w_k w_{k+1} ... w_j$ \;
    }  
    \Return The parse tree rooted at $n_0$.
    \caption{Tree induction based on syntactic distance}
    \label{alg:tree_induction}
\end{algorithm}

\section{ON-LSTM Intuition}
\label{sec:appendix:shen2019intuition}

See Figure \ref{fig:shen2019intuition} below,
which is excerpted from \citep{shen2018ordered}
with minor adaptation to the notation.

\begin{figure*}[ht]
    \centering
    \includegraphics[width=16cm]{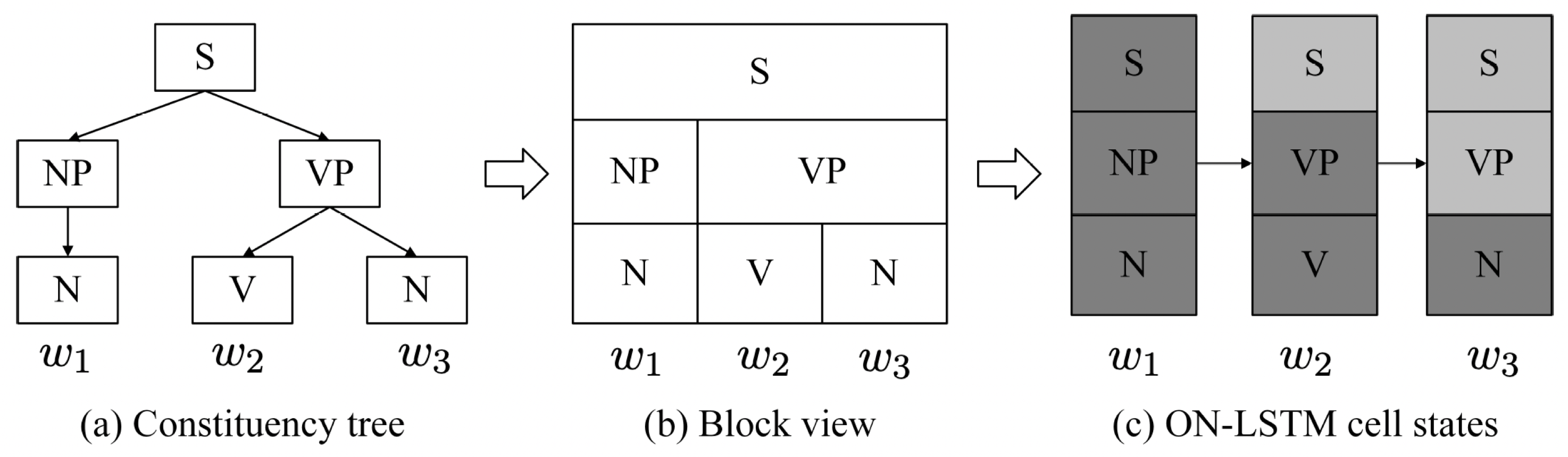}
    \caption{Relationship between the parse tree, the block view, and the ON-LSTM. Excerpted from \citep{shen2018ordered}
    with minor adaptation to the notation. 
    }
    \label{fig:shen2019intuition}
\end{figure*}

\newpage
\section{Examples of parsing transitions}
\label{sec:appendix:parsing_transitions_examples}

Table \ref{table:transition_based_parsing_example} below shows an example of parsing the sentence ``\textit{I drink coffee with milk}" using the set of transitions given by Definition \ref{def:adjoin}, 
which employs the parsing framework of \citep{dyer2016recurrent}.
The parse tree of the sentence is given by 

\vspace{-8mm}  
{
\small
\begin{center}
    \begin{tabular}{c}
         \Tree [.S 
            [.NP  
                [.N I ] 
            ] 
            [.VP 
                [.V drink ] 
                [.NP
                    [.NP [.N coffee ] ]
                    [.PP 
                        [.P with ]
                        [.N milk ]
                    ]
                ] 
            ] 
        ]
    \end{tabular}
\end{center}
}

\vspace{-6mm}  
\begin{table*}[ht]
\centering
\begin{tabular}{lll}
\hline
\textbf{Stack} & \textbf{Buffer} & \textbf{Action}\\
\hline
 & \textit{I drink coffee with milk} & \verb|NT(S)| \\
(S & \textit{I drink coffee with milk} & \verb|NT(NP)| \\
(S \(|\) (NP & \textit{I drink coffee with milk} & \verb|NT(N)| \\
(S \(|\) (NP \(|\) (N & \textit{I drink coffee with milk} & \verb|SHIFT| \\
(S \(|\) (NP \(|\) (N \(|\) \textit{I} & \textit{drink coffee with milk} & \verb|REDUCE| \\
(S \(|\) (NP (N \textit{I})) & \textit{drink coffee with milk} & \verb|NT(VP)| \\
(S \(|\) (NP (N \textit{I})) \(|\) (VP & \textit{drink coffee with milk} & \verb|NT(V)| \\
(S \(|\) (NP (N \textit{I})) \(|\) (VP \(|\) (V & \textit{drink coffee with milk} & \verb|SHIFT| \\
(S \(|\) (NP (N \textit{I})) \(|\) (VP \(|\) (V \(|\) \textit{drink} & \textit{coffee with milk} & \verb|REDUCE| \\
(S \(|\) (NP (N \textit{I})) \(|\) (VP \(|\) (V \textit{drink}) & \textit{coffee with milk} & \verb|NT(NP)| \\
(S \(|\) (NP (N \textit{I})) \(|\) (VP \(|\) (V \(|\) \textit{drink}) \(|\) \\ \quad (NP & \textit{coffee with milk} & \verb|NT(NP)| \\
(S \(|\) (NP (N \textit{I})) \(|\) (VP \(|\) (V \textit{drink}) \(|\) \\ \quad  (NP \(|\) (NP & \textit{coffee with milk} & \verb|NT(N)| \\
(S \(|\) (NP (N \textit{I})) \(|\) (VP \(|\) (V \textit{drink}) \(|\) \\ \quad  (NP \(|\) (NP \(|\) (N & \textit{coffee with milk} & \verb|SHIFT| \\
(S \(|\) (NP (N \textit{I})) \(|\) (VP \(|\) (V \textit{drink}) \(|\) \\ \quad  (NP \(|\) (NP \(|\) (N \(|\) \textit{coffee} & \textit{with milk} & \verb|REDUCE| \\
(S \(|\) (NP (N \textit{I})) \(|\) (VP \(|\) (V \textit{drink}) \(|\) \\ \quad  (NP \(|\) (NP (N \textit{coffee})) & \textit{with milk} & \verb|NT(PP)| \\
(S \(|\) (NP (N \textit{I})) \(|\) (VP \(|\) (V \textit{drink}) \(|\) \\ \quad  (NP \(|\) (NP (N \textit{coffee})) \(|\) (PP & \textit{with milk} & \verb|NT(P)| \\
(S \(|\) (NP (N \textit{I})) \(|\) (VP \(|\) (V \textit{drink}) \(|\) \\ \quad  (NP \(|\) (NP (N \textit{coffee})) \(|\) (PP \(|\) (P & \textit{with milk} & \verb|SHIFT| \\
(S \(|\) (NP (N \textit{I})) \(|\) (VP \(|\) (V \textit{drink}) \(|\) \\ \quad  (NP \(|\) (NP (N \textit{coffee})) \(|\) (PP \(|\) (P \(|\) \textit{with} & \textit{milk} & \verb|REDUCE| \\
(S \(|\) (NP (N \textit{I})) \(|\) (VP \(|\) (V \textit{drink}) \(|\) \\ \quad  (NP \(|\) (NP (N \textit{coffee})) \(|\) (PP \(|\) (P \textit{with}) & \textit{milk} & \verb|NT(N)| \\
(S \(|\) (NP (N \textit{I})) \(|\) (VP \(|\) (V \textit{drink}) \(|\) \\ \quad  (NP \(|\) (NP (N \textit{coffee})) \(|\) (PP \(|\) (P \textit{with}) \(|\) (N & \textit{milk} & \verb|SHIFT| \\
(S \(|\) (NP (N \textit{I})) \(|\) (VP \(|\) (V \textit{drink}) \(|\)  \\ \quad (NP \(|\) (NP (N \textit{coffee})) \(|\) (PP \(|\) (P \textit{with}) \(|\) (N \(|\) \textit{milk} &  & \verb|REDUCE| \\
(S (NP (N \textit{I})) (VP (V \textit{drink}) \\ \quad (NP (NP (N \textit{coffee})) (PP (P \textit{with}) (N \textit{milk}))))) &  &  \\
\hline
\end{tabular}
\caption{\label{table:transition_based_parsing_example}
Transition-based parsing of the sentence ``\textit{I drink coffee with milk}".
}
\end{table*}

\end{document}